\newtheorem{lemma}{Lemma}    
\newtheorem*{lemma*}{Lemma}  
\newtheorem*{proof*}{Proof}  
\definecolor{cvprblue}{rgb}{0.21,0.49,0.74}
\title{DiffGradCAM: A Class Activation Map Using the Full Model Decision to Solve Unaddressed Adversarial Attacks}
\author{%
  Jacob Piland$^\dag$, Christopher Sweet$^\ddag$, and Adam Czajka$^\dag$ 
  \\~$^\dag$Department of Computer Science and Engineering,~$^\ddag$Center for Research Computing\\
  University of Notre Dame du Lac, Notre Dame, IN 46556, USA\\
  {\tt\small \{jpiland,csweet1,aczajka\}@nd.edu}
}
\begin{document}
\maketitle

\begin{abstract}

Class Activation Mapping (CAM) and its gradient-based variants (\eg, GradCAM) have become standard tools for explaining Convolutional Neural Network (CNN) predictions. 
However, these approaches typically focus on individual logits, while for neural networks using softmax, the class membership probability estimates depend only on the differences between logits, not on their absolute values.
This disconnect leaves standard CAMs vulnerable to adversarial manipulation, such as passive fooling, where a model is trained to produce misleading CAMs without affecting decision performance.

To address this vulnerability, we propose DiffGradCAM and its higher-order derivative version DiffGradCAM++, as novel, lightweight, contrastive approaches to class activation mapping that are not susceptible to passive fooling and match the output of standard methods such as GradCAM and GradCAM++ in the non-adversarial case.
To test our claims, we introduce Salience-Hoax Activation Maps (SHAMs), a more advanced, entropy-aware form of passive fooling that serves as a benchmark for CAM robustness under adversarial conditions. Together, SHAM and DiffGradCAM establish a new framework for probing and improving the robustness of saliency-based explanations. We validate both contributions across multi-class tasks with few and many classes.

\end{abstract}

\section{Introduction}

\subsection{Background and Motivation}

Interpretability methods for deep neural networks are critical for ensuring trust, transparency, and accountability in machine learning systems. Among them, Class Activation Mapping (CAM) \cite{Zhou_CVPR_2016} and its gradient-based extensions such as Gradient-weighted Class Activation Mapping (GradCAM) \cite{gradcam} have become standard techniques for visualizing which regions of an input most influence a CNN's prediction. 
However, standard CAMs rely on a simplifying assumption that the importance of a class can be understood by inspecting the gradient of its individual logit. In contrast, softmax decisions depend on \emph{logit differences} and not their absolute values. For example, in binary classification, the model's output is governed by the difference $y_1 - y_2$ (where $y_1$ and $y_2$ are the logits of two output neurons), not the magnitude of $y_1$ alone. Focusing on a single logit therefore observes only part of the model’s reasoning; aggregating over the \emph{entire competitor set} would integrate both supporting and opposing evidence, aligning explanations with the actual decision the network makes.

This fundamental disconnect between what CAM methods visualize and how decisions are actually made introduces a critical adversarial vulnerability: 
CAMs can be passively fooled, that is, a model can be adversarially trained or fine-tuned to produce misleading CAMs while preserving predictive accuracy \cite{heo2019fooling}. Yet these prior manipulations are self-described as arbitrary, and do not necessarily take into account behavioral details that are known about models trained with salience, making them less representative of real-world model behavior.

\subsection{Proposed Approach and Research Questions}
The above vulnerability is addressed in this paper in a twofold way. First, \textbf{we introduce Salience-Hoax Activation Maps (SHAMs)} as a benchmark. SHAMs are a form of adversarial salience that, when used in training or fine-tuning, produce an entropy-aware form of passive fooling. It has been shown that different models have different average CAM entropies \cite{DROID}, and thus SHAMs improve on previous adversarial techniques by taking into account CAM entropy of models trained with salience-based training in their design. Models trained or fine-tuned with adversarial SHAM saliency maps maintain performance and also the expected model CAM entropy while redirecting activations to meaningless regions (\eg, image borders instead of salient features). Because SHAMs preserve model accuracy while generating realistic yet misleading explanations, they provide both a generalizable threat model and a broadly applicable benchmarking tool for evaluating the robustness of interpretation methods.

Second, to address the vulnerability exposed by benchmark SHAMs, which expand on previous adversarial techniques, \textbf{we propose DiffGradCAM} and its higher-order derivative counterpart DiffGradCAM++. Both are lightweight, contrastive variants that align saliency with the model’s decision. Rather than targeting a single logit, DiffGradCAM computes gradients with respect to the difference between the true class logit and an aggregate over the competing logits. This contrastive formulation directly reflects the softmax decision boundary and enables more faithful saliency.

We examine several candidate aggregation functions for DiffGradCAM and DiffGradCAM++ and find that MeanDiffGradCAM and MeanDiffGradCAM++, variants using the mean of false-class logits as the contrast baseline, are capable of producing the same mappings as GradCAM and GradCAM++ in the non-adversarial setting, and they exhibit significantly improved resistance to SHAM-based manipulation. We evaluate the effects of SHAM and the robustness of DiffGradCAM across multi-class tasks with few-class and many-class settings, demonstrating that SHAM alters GradCAM and GradCAM++, and that DiffGradCAM and DiffGradCAM++ provide a practical, CNN-architecture-agnostic path to adversarially robust explanation. We define the following {\bf research questions (RQs)} to systematically evaluate the effectiveness and robustness of SHAM and DiffGradCAM:

\begin{itemize}[leftmargin=9.5mm]
    \itemsep0em
    \item[{\bf RQ1:}] In the novel, few-class iris presentation attack detection (PAD) domain, can SHAM-based passive fooling produce misleading CAMs without negatively impacting classification accuracy?

    \item[{\bf RQ2:}] In the standard, large-scale ImageNet setting, do any of the proposed DiffGradCAM variants match GradCAM in a non-adversarial context?

    \item[{\bf RQ3:}] In the large-scale ImageNet scenario, are DiffGradCAM variants resistant to SHAM-based adversarial manipulation, and if so, how does their resistance compare to class-independent CAM methods?
\end{itemize}

\subsection{Summary of Contributions}

We introduce SHAMs as an entropy-aware generalization of passive fooling that leverages CAM entropy from salience-based training to induce misleading CAMs without degrading predictive performance.
We propose DiffGradCAM and DiffGradCAM++, which replace single-logit targets with a decision-aligned logit difference $\Delta^c$ (Eqn.~\ref{eqn:delta})
, thereby aggregating over all competitors and capturing more of the model’s reasoning. We qualitatively evaluate an existing class-agnostic CAM variant, against our DiffGradCAM, on a few-class Iris PAD task.
Finally, we quantitatively assess DiffGradCAM and DiffGradCAM++’s fidelity to GradCAM and GradCAM++ in non-adversarial settings and their robustness to SHAM-based passive fooling on ImageNet, on a variety of CNN backbones showing that our novel methods can replace GradCAM and GradCAM++ while offering stronger resistance to adversarial CAM manipulation.

\section{Related Works}

Since CAMs were first introduced in 2016 \cite{Zhou_CVPR_2016}, there has been an explosion of CAM alternatives \cite{hirescam,scorecam,gradcam++,ablationcam,xgradcam,eigencam} with one of the most popular being GradCAM \cite{gradcam}. However, it has been shown that models can be deceived into producing arbitrary CAMs, either through adversarial input \cite{input_manip} or training manipulation \cite{heo2019fooling}. The latter category is further divided into active fooling, where the model is trained so that CAMs produced on samples from one class resemble those of another class, and passive fooling, where the model is trained to produce a nonsense CAM regardless of input. {\bf This work differs from prior approaches} by providing an updated passive-fooling method that uses published CAM entropy \cite{cam_entropy2022} to construct an adversarial salience that misleads CAMs without degrading performance, and by introducing DiffGradCAM, a post-hoc, lightweight, general variant of GradCAM that is robust to adversarial CAM manipulation while matching GradCAM in non-adversarial settings.

\section{Limitations of Standard CAM/GradCAM}
\label{sec:limitaion_of_standard_CAM}
\subsection{The Dominant Logit Assumption}
\label{sec:the_dominant_logit_assumption}
Standard CAM/GradCAM weight feature maps by the gradient of the true-class logit with respect to activations, assuming this gradient isolates regions important for that class. The mathematical formulation relies on an implicit assumption: when the true logit greatly exceeds other logits, the gradient of the true logit dominates the explanation.

Because softmax outputs sum to one, the gradient of the true logit equals the negative sum of gradients of all false-class logits. When the true logit substantially exceeds others, this yields a small combined contribution from non-target classes. Consequently, false-class activations minimally influence the resulting heatmap.

A critical insight motivating our approach is that for neural networks using softmax, the probability output depends \textit{only} on the \textit{differences} between logits, not on their absolute values. As an example, we make the following observation for the binary class case.

\begin{lemma}
    In the binary classification setting with logits $y_1$ and $y_2$, softmax reduces to the sigmoid function applied to their difference.
\end{lemma}

\begin{proof}
    The definition of softmax for the two-class case and factor \( e^{y_2} \) out of both numerator and denominator is:
    \[
    p_1 = \frac{e^{y_1}}{e^{y_1} + e^{y_2}} = \frac{e^{y_1} / e^{y_2}}{e^{y_1}/e^{y_2} + 1} = \frac{e^{y_1 - y_2}}{e^{y_1 - y_2} + 1}
    \]
    which can be expressed as the sigmoid function:
    \[
    p_1 = \frac{1}{1 + e^{-(y_1 - y_2)}} = \sigma(y_1 - y_2)
    \]
\end{proof}

This demonstrates that the model's decision fundamentally depends on the logit difference $(y_1 - y_2)$, not on individual logit values. Standard CAM/GradCAM approaches, by focusing only on the gradient of the true-class logit, fail to explicitly capture this differential relationship that actually drives model predictions.

\subsection{Salience-Hoax Activation Maps}
\label{ssec:sham}

This reliance on single-logit targeting can fail under adversarial or noisy conditions. Small perturbations that inflate a false logit may not flip the classification outcome but can dramatically alter model behavior. Standard CAM methods may miss these manipulated regions, providing misleading explanations of the model's decision process.

These vulnerabilities have been leveraged to produce arbitrary CAMs \cite{heo2019fooling}. We extend this idea one step further by using known properties of models trained with salience to design an adversarial salience for training models to produce misleading CAMs. We call this adversarial salience a Salience-Hoax Activation Map (SHAM) and models trained or fine-tuned with SHAM are passively-fooled models.

\begin{SCfigure}[2.9][!h]
    \includegraphics[width=0.11\textwidth]{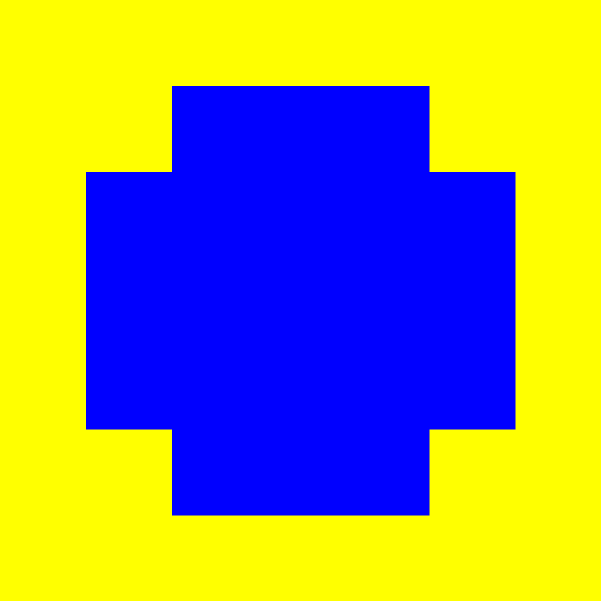}
    \caption{Adversarial SHAM saliency used as saliency in training and fine-tuning for producing misleading CAMs. Yellow and blue sections denote maximum and minimum values, respectively.}
    \label{fig:sham_annotation}
\end{SCfigure}

We choose to direct the model to the edges of the image as in \cite{heo2019fooling} (the assumed worst-case stress-test) and we use CAM entropy to determine how much of the image the adversarial salience annotates. It has previously been shown that some salience-based models produce CAMs with an average CAM entropy of 3.35 Shannons \cite{DROID} and we set our SHAM to match this value.
For DenseNet, which has a $7\times7=49$ pixel CAM, the SHAM has 28 of 49 pixels set to a value of 1 around the edges with the center pixels being set to 0 for a SHAM entropy of 3.33, as illustrated in Fig. \ref{fig:sham_annotation}.

\section{The DiffGradCAM Approach}
\subsection{From GradCAM to DiffGradCAM}
DiffGradCAM is an improvement building on the foundation of GradCAM \cite{gradcam} which is defined as:
\begin{equation}
\mathrm{GradCAM}^c = \mathrm{ReLU}\left(\sum^K_{k=1} \alpha^{c}_k A^k\right),
\end{equation}
where $c$ is the class of interest, $A^k$ is the $k^{th}$ feature map of a total of $K$ feature maps. The coefficients are defined as:
\begin{equation}
\alpha^{c}_k = \frac{1}{Z}\sum^u_{i=1}\sum^v_{j=1}\frac{\partial y^c}{\partial A^k_{i,j}},
\end{equation}

\noindent where $C$ is the total number of logits, $y^c$ the logit considered, $u,v$ are the dimensions of $A$, and $Z$ is the number of feature map elements. 

The idea of DiffGradCAM is to leverage information from the logits associated with the other classes $c' \ne c$ by replacing the logit $y^c$ associated with class $c$ with a function that aggregates the logits based on the class choice $c$. Inspired by the observations in Sec. \ref{sec:the_dominant_logit_assumption}, we denote this function as $\Delta^c$ in the following discussion and the equation for the DiffGradCAM coefficients becomes:
\begin{equation}
\alpha^{c}_k = \frac{1}{Z}\sum_i\sum_j\frac{\partial \Delta^c}{\partial A^k_{i,j}}.
\end{equation}

\label{sec:dgc}
\subsection{The binary classification case}
In the binary case $\Delta^c$ simplifies to: 

\begin{equation}
\Delta^c = y^c - y^{c'},
\end{equation}
where $c' \ne c$.

Taking gradients $\partial \Delta / \partial A$ directly measures how each feature-map activation $A$ shifts the model's preference for the true class over its competitors.

By formulating the explanation target as a logit difference, DiffGradCAM aligns the visualization with the actual decision mechanism of the model. This represents a fundamental shift from highlighting regions that merely increase a single logit to highlighting regions that contribute to the model's class discrimination.

\subsection{Extension to Multi-Class Settings}
For multi-class models with more than two classes, we define a contrastive logit for each class $c$:

\begin{equation}
\Delta^c = y^c - \beta(Y^c),
\label{eqn:delta}
\end{equation}
where $Y^c=\{y^{c'} : c' \neq c\}$ and $\beta$ is a function over the non-target logits $Y^c$. We consider three $\beta$ functions: 

\begin{equation}
\beta_{\mathrm{mean}}^c = \frac{1}{C-1}\sum_{j\neq c} y^j,
\end{equation}
\noindent called ``Mean baseline,'' where $C$ is the number of logits,

\begin{equation}
\beta_{\mathrm{max}}^c = \max_{j\neq c} y^j,
\end{equation}
\noindent called ``Max baseline,'' and

\begin{equation}
\beta_\mathrm{LSE}^c = \log \left(\frac1{C-1}\sum_{j\neq c} e^{y^j}\right),
\end{equation}
\noindent called ``log-sum-exp (LSE) baseline.'' The shift $\frac1{C-1}$ just matches the scale of the MeanDiffGradCAM baseline, and it does not alter gradients or variance.


$\Delta^c$ isolates evidence that lifts the true class above its rivals.  
Subtracting the mean of the false logits measures the margin over a typical competitor, while the smooth LSE baseline 
weighs the strongest rival most.  
The Mean baseline suits spread-out residuals, while the log-sum-exp baseline suits cases where one non-target class dominates.  
We analyze this trade-off next.

\subsection{Choice of Aggregator}
\label{sec:dist_choice}
The $\Delta^c$ is the differential logit driving DiffGradCAM. We wish to pick the baseline function
$\beta(\cdot)$ so that $\Delta^c$ is \emph{stable} (low variance) yet discriminative (large mean separation) across draws of the false
logits $\{y^{c'} : c' \neq c\}$. We study two canonical choices: $\beta_{\mathrm{mean}}$ and $\beta_{\mathrm{LSE}}$.

The $\beta_{\mathrm{LSE}}$ baseline lies between the mean and the max
(according to Jensen \cite{jensen1906fonctions}: $\beta_{\mathrm{mean}}$ $\le$ $\beta_{\mathrm{LSE}}$ $\le$ $\beta_{\mathrm{max}}$). 
We consider several cases before providing a practical guideline, which is supported by our results (Sec.~\ref{sec:results}).

\subsubsection{Residual-logit Model}
Assume the false logits $Y^c$ are i.i.d.\ from a
distribution with mean $\mu$ and variance $\sigma^{2}$.
Write
\(\hat\mu=\frac1{C-1}\sum_{j\neq c}y^j\) and
\(y_{\max}=\max_{j\neq c}y^j\).
Throughout we treat $C\!>\!2$.

\subsubsection{Case: Many Classes $(C\gg1)$ and Broad Tails}
ImageNet-scale models exhibit large $C$ and sizable
$\sigma^{2}$.  Extreme-value theory \cite{leadbetter2012extremes} then gives
\begin{equation}
\label{eq:evl}
\mathbb{E}[y_{\max}]
\;=\;
\mu + \sigma\sqrt{2\log(C-1)} \;+\; o(1),
\end{equation}
valid for any sub-Gaussian residual distribution.
Consequently $\mathrm{MeanDiffGradCAM}\approx y_{\max}-\log(C-1)
$ (dominated by
the largest term in the sum), which inflates
$\operatorname{Var}[\Delta^c]$ and yields noisy saliency maps.
By contrast, $\beta_{\mathrm{mean}}$ satisfies
$\operatorname{Var}[\beta_{\mathrm{mean}}]=\sigma^{2}/(C-1)$,
shrinking to zero as $C$ grows.

\begin{lemma}[Heavy-tail regime]
\label{lem:mean_opt}
If $\sigma\sqrt{\log C}\gg 1$ (broad residual distribution) then
\(
\operatorname{Var}[\Delta^c]\;
\text{is minimized by}\;
\beta_{\mathrm{mean}}.
\)
\end{lemma}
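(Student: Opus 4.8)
The plan is to first strip off the part of $\operatorname{Var}[\Delta_i]$ that does not see the baseline. Writing $\Delta_i = z_i - b(z_{-i})$ and using that $z_i$ is not a function of the residual logits,
\[
\operatorname{Var}[\Delta_i] \;=\; \operatorname{Var}[z_i] \;-\; 2\operatorname{Cov}\!\bigl(z_i,\,b(z_{-i})\bigr) \;+\; \operatorname{Var}[b(z_{-i})],
\]
and under the working assumption $z_i\perp z_{-i}$ the cross term vanishes and $\operatorname{Var}[z_i]$ is a baseline-independent constant (if $z_i$ is only weakly dependent on $z_{-i}$, the same reduction holds up to an $O$-correction common to all candidates). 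So minimizing $\operatorname{Var}[\Delta_i]$ over the candidate family $\{b_{\mathrm{mean}},\,b_{\max},\,b_{\mathrm{soft}}\}$ reduces to comparing the three scalars $\operatorname{Var}[b_{\mathrm{mean}}]$, $\operatorname{Var}[b_{\max}]$, $\operatorname{Var}[b_{\mathrm{soft}}]$. (Here ``minimized'' is understood within this natural family of contrastive baselines; an unrestricted minimizer would be a useless constant.)

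Next I would record, or bound, those three variances. For the mean baseline, i.i.d.-ness gives the exact value $\operatorname{Var}[b_{\mathrm{mean}}]=\sigma^{2}/(K-1)$, already noted above. For the max, sub-Gaussian extreme-value theory --- the Gumbel limit underlying \eqref{eq:evl}, cf.\ \cite{leadbetter2012extremes} --- gives $\operatorname{Var}[z_{\max}]=\Theta\!\bigl(\sigma^{2}/\log K\bigr)$; the hypothesis $\sigma\sqrt{\log K}\gg 1$ is exactly what places the fluctuations of $z_{\max}-\mu$ on a macroscopic scale, so the leading term dominates the $o(1)$ corrections. For the soft baseline, the Jensen sandwich $b_{\mathrm{mean}}\le b_{\mathrm{soft}}\le z_{\max}$ together with the deterministic bound $b_{\mathrm{soft}}\ge z_{\max}-\log(K-1)$ shows that in the broad-tail regime $b_{\mathrm{soft}} = z_{\max}-\log(K-1)+\varepsilon_K$, where the log-sum-exp is carried by its largest summand and $\varepsilon_K\to 0$; hence $\operatorname{Var}[b_{\mathrm{soft}}]$ is bounded below on the order $\sigma^{2}/\log K$ as well. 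Note that for the conclusion I only need \emph{lower} bounds on the two competitors and the \emph{exact} (upper) value for the mean.

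Finally I would compare: since $\log K = o(K)$, the mean-baseline variance $\sigma^{2}/(K-1)$ is $o\!\bigl(\sigma^{2}/\log K\bigr) = o\!\bigl(\min\{\operatorname{Var}[b_{\max}],\,\operatorname{Var}[b_{\mathrm{soft}}]\}\bigr)$, so for all sufficiently large $K$ (the regime of the hypothesis) $b_{\mathrm{mean}}$ strictly minimizes $\operatorname{Var}[\Delta_i]$, which is the claim. The step I expect to be genuinely delicate is the lower bound on $\operatorname{Var}[b_{\mathrm{soft}}]$: I need the soft baseline to inherit the $\Theta(\sigma^{2}/\log K)$ wobble of the maximum rather than self-average, i.e.\ I must upgrade ``$b_{\mathrm{soft}}\approx z_{\max}-\log(K-1)$ in probability'' to an $L^{2}$ statement and verify that the spacing between the top two order statistics is $\Omega(1)$ --- and it is precisely the heavy-tail hypothesis $\sigma\sqrt{\log K}\gg 1$ that guarantees this spacing. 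The remaining computations (the Gumbel variance constant, the $L^{2}$ control of $\varepsilon_K$) are routine.
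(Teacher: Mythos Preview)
Your proposal is correct and follows essentially the same route as the paper's appendix: split $\operatorname{Var}[\Delta_i]$ via independence into $\operatorname{Var}[z_i]+\operatorname{Var}[b]$, compute $\operatorname{Var}[b_{\mathrm{mean}}]=\sigma^{2}/(K-1)$ exactly, invoke the Gumbel-limit variance $\operatorname{Var}[z_{\max}]=\Theta(\sigma^{2}/\log K)$ together with max-dominance of the log-sum-exp for $b_{\mathrm{soft}}$, and then compare orders. You are, if anything, more careful than the paper in flagging that the approximation $b_{\mathrm{soft}}\approx z_{\max}-\log(K-1)$ must hold in $L^{2}$ (not merely in probability) for the variance lower bound to transfer---the paper passes over this point silently.
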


\begin{proof}[Sketch]
Using~\eqref{eq:evl} and the independence of $y^c$ and $Y^c$,
evaluate $\operatorname{Var}[\Delta^c]$ for each baseline and
keep leading terms in $C$.  See supplementary materials.
\end{proof}

\subsubsection{Case: Few Classes or Peaked Residuals}
Datasets such as Iris PAD ($C=7$) produce residual logits that
cluster tightly (assume $\sigma^{2}=O(1/C)$).
A second-order Taylor expansion of $\beta_{\mathrm{soft}}$ around empirical mean
$\hat\mu$ gives
\(
\beta_{\mathrm{soft}}
=\hat\mu + \tfrac{\sigma^{2}}{2} + O(\sigma^{3}),
\)
so $\beta_{\mathrm{soft}}$ and $\beta_{\mathrm{mean}}$ differ by at most
$O(\sigma^{2})$.

\begin{lemma}[Peaked regime]
\label{lem:soft_equal}
If $\sigma^{2}=O(1/C)$ then
$\bigl|\beta_{\mathrm{soft}}-\beta_{\mathrm{mean}}\bigr|
=O(\sigma^{2})$ and the two baselines yield
indistinguishable $\Delta^c$ up to $O(\sigma^{2})$.
\end{lemma}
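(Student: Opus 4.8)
The plan is to expand $b_{\mathrm{soft}}$ about the empirical mean $\hat\mu$ of the false logits and exploit the fact that, by construction, the first-order term of that expansion vanishes, so the gap to $b_{\mathrm{mean}}$ is governed purely by the second-order (variance) term. First I would reparametrize the false logits as $z_j=\hat\mu+\delta_j$ with $\sum_{j\neq i}\delta_j=0$, and factor $e^{\hat\mu}$ out of the soft baseline to obtain
\[
b_{\mathrm{soft}} \;=\; \hat\mu \;+\; \log\!\Bigl[\tfrac1{K-1}\sum_{j\neq i} e^{\delta_j}\Bigr].
\]
Since $b_{\mathrm{mean}}=\hat\mu$, the entire claim reduces to bounding the logarithmic correction term by $O(\sigma^2)$.

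Next I would expand $e^{\delta_j}=1+\delta_j+\tfrac12\delta_j^2+r_j$ with Lagrange remainder $|r_j|\le\tfrac16|\delta_j|^3 e^{|\delta_j|}$, average over $j\neq i$, and use $\sum_{j}\delta_j=0$ to annihilate the linear contribution. This leaves $\tfrac1{K-1}\sum_{j\neq i} e^{\delta_j}=1+\tfrac12 s^2+\bar r$, where $s^2=\tfrac1{K-1}\sum_{j\neq i}\delta_j^2$ is the empirical variance of the false logits and $\bar r=\tfrac1{K-1}\sum_{j\neq i} r_j$. Applying $\log(1+x)=x+O(x^2)$ then yields $b_{\mathrm{soft}}-b_{\mathrm{mean}}=\tfrac12 s^2+O(s^4)+O(\bar r)$, which is precisely the $\tfrac{\sigma^2}{2}+O(\sigma^3)$ expansion asserted informally just before the lemma.

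It then remains to certify that $s^2=O(\sigma^2)$ and that the cubic remainder is lower order. The empirical variance concentrates ($\mathbb{E}[s^2]=\sigma^2$, and under the standing sub-Gaussian residual assumption $s^2=\sigma^2(1+o(1))$ with high probability), so $s^2=O(\sigma^2)$ and hence $O(s^4)=o(\sigma^2)$. For $\bar r$ I would invoke a maximal inequality, in the same spirit as Eq.~\eqref{eq:evl}, to get $\max_{j\neq i}|\delta_j|=O\!\bigl(\sigma\sqrt{\log K}\bigr)$ with high probability; the hypothesis $\sigma^2=O(1/K)$ forces $\sigma\sqrt{\log K}=o(1)$, so $e^{|\delta_j|}=O(1)$ and $|\bar r|=O\!\bigl((\max_{j\neq i}|\delta_j|)\, s^2\bigr)=O(\sigma^3\sqrt{\log K})=o(\sigma^2)$. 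Combining the three pieces gives $|b_{\mathrm{soft}}-b_{\mathrm{mean}}|=\tfrac12 s^2+o(\sigma^2)=O(\sigma^2)$. Finally, because $\Delta_i=z_i-b(z_{-i})$ and $z_i$ is shared by both variants, the two differential logits differ by exactly $b_{\mathrm{soft}}-b_{\mathrm{mean}}$, so they are indistinguishable up to $O(\sigma^2)$, as claimed.

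I expect the main obstacle to be the remainder control rather than the expansion itself: one must argue the residuals are \emph{uniformly} (not merely on average) small enough for the truncated exponential series to be valid term-by-term, and this is exactly where $\sigma^2=O(1/K)$ is indispensable — it is just strong enough to dominate the $\sqrt{\log K}$ growth of the maximum residual and keep $e^{|\delta_j|}$ bounded. A secondary subtlety is in what mode the $O(\sigma^2)$ bound is meant to hold (deterministically, if the $\delta_j$ are assumed bounded; otherwise in expectation or with high probability under the sub-Gaussian model); I would phrase the statement in the high-probability sense, consistent with the extreme-value estimates used in the heavy-tail regime.
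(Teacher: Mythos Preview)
Your proposal is correct and follows essentially the same second-order Taylor expansion of $b_{\mathrm{soft}}$ that the paper uses, with only a cosmetic difference: you center at the empirical mean $\hat\mu$ (so the linear term vanishes identically and the gap is $\tfrac12 s^2+o(\sigma^2)$), whereas the paper's appendix phrases the same expansion via the cumulant-generating identity $\log\mathbb{E}[e^{Z}]=\mu+\tfrac{\sigma^2}{2}+O(\sigma^3)$ centered at the population mean and then substitutes $\hat\mu=\mu+O(\sigma)$. Your remainder control---using $\sigma^2=O(1/K)$ to absorb the $\sqrt{\log K}$ growth of $\max_j|\delta_j|$---is more explicit than the paper's terse $O(\sigma^3)$, but both routes land on the same $\tfrac{\sigma^2}{2}+O(\sigma^3)$ bound.
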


\begin{proof}
Apply the cumulant-generating expansion
$\log\mathbb{E}[e^{Z}]=\mu+\frac{\sigma^{2}}{2}+O(\sigma^{3})$
with $Z=y^j-\mu$ and substitute $\hat\mu=\mu+O(\sigma)$. \qedhere
\end{proof}

\subsubsection{Practical Guideline}
With few classes or tight residuals the two baselines coincide, so we may use either. However, for many classes with broad residuals (\eg, ImageNet) we should typically use the mean baseline to damp the $\sqrt{\log C}$ boost, as is supported by the above sections.  This is corroborated by our small and large dataset experiments (Sec. \ref{sec:results}).

\subsection{Concrete Example: 3-Class Case}
Consider a model with logits $(y_1,y_2,y_3)$. For class 1, using the mean baseline:

\begin{equation}
\Delta_1 = y_1 - \frac{y_2 + y_3}{2}
\end{equation}

Similarly, $\Delta_2 = y_2 - (y_1+y_3)/2$ and $\Delta_3 = y_3 - (y_1+y_2)/2$. Applying DiffGradCAM yields three contrastive heatmaps that localize class-specific features against their competitors.

\subsection{Theoretical Advantages}
DiffGradCAM is (a) \textbf{boundary-aligned}, operating on logit gaps that match the softmax decision surface, (b) \textbf{robust}, because inflating a single logit does not alter those gaps, (c) \textbf{discriminative}, highlighting pixels that separate the target class from its rivals. It is also \textbf{simplex-consistent}, living in the $(C-1)$-dimensional log-odds space of the probability simplex, and (d) \textbf{plug-and-play}, since the choice of backpropagation target does not change the network or data.

\subsection{Application to Higher-order Derivative CAMs}
GradCAM++ generalizes GradCAM by replacing a single global weight per channel with pixel-wise coefficients that depend on higher-order derivatives, improving localization when multiple instances of a class are present. Concretely, for a target class 
$c$, GradCAM++ forms per-location coefficients $\alpha_k^{(u,v)}$ from second and third order partials of the target with respect to activations $A_k(u,v)$ and then aggregates. We introduce DiffGradCAM++ which follows this procedure with target difference logit $\Delta^c$:

\begin{equation}
w^{(c)}_k = \sum_{u,v}\alpha_k^{(u,v)}\mathrm{ReLU}\left(\frac{\partial \Delta^c}{\partial A_k(u,v)}\right)
\end{equation}

\begin{equation}
\mathrm{DiffGradCAM}\text{++}^c = \mathrm{ReLU}\left(\sum_k w^{(c)}_k A_k\right)
\end{equation}

As with DiffGradCAM, we name a specific CAM with the prefix of the aggregator used (\eg, MeanDiffGradCAM++).

\section{Experiment Design}
\label{sec:exp_design}

We conduct three experiments: (a) train salience-based models for the few-class Iris PAD task with and without SHAM (addressing {\bf RQ1}); (b) generate and quantitatively compare GradCAM with several DiffGradCAM variants, and GradCAM++ with DiffGradCAM++ variants, using ImageNet-pretrained models (addressing {\bf RQ2}); and (c) compare various CAMs from the clean models in (b) with those from SHAM-fine-tuned counterparts (addressing {\bf RQ3}).

To demonstrate the broad applicability of adversarial SHAM and DiffGradCAM we evaluate on two image classification problems: the seven-class Iris PAD domain and ImageNet (1000 classes).

\subsection{Training Scenarios and Performance Metrics}
\label{sec:training-scenarios}


\textbf{Iris PAD (small $C$).} Following Boyd et al.~\cite{boyd2022human}, we train five DenseNet runs under two supervision regimes: human salience and adversarial SHAM—using an objective function combining classification (cross-entropy) and salience (MSE of target and model salience). We report accuracy on balanced classes and visualize representative CAMs.


\textbf{ImageNet similarity.} Using four ImageNet-pretrained architectures (DenseNet-121, ResNet-50, Inception-v3, and ConvNeXt-Tiny), we sample five validation images per class (5{,}000 total). We generate GradCAM and GradCAM++, along with DiffGradCAM (mean, max, LSE) and the corresponding DiffGradCAM++ variants. For each image and architecture, we compute the per-pixel MSE between (i) DiffGradCAM maps and their baseline GradCAM maps, and (ii) DiffGradCAM++ maps and their baseline GradCAM++ maps. All heatmaps are min–max normalized to $[0,1]$ prior to comparison. We report means across the dataset and use the Wilcoxon rank–sum test  to confirm significance.


\textbf{Susceptibility test.} Using the models and test set from (b), we evaluate robustness across all four architectures. For each model, we compute CAMs for GradCAM \cite{gradcam}, GradCAM++ \cite{gradcam++}, EigenCAM \cite{eigencam}, HiResCAM \cite{hirescam}, XGradCAM \cite{xgradcam}, ScoreCAM \cite{scorecam}, DiffGradCAM (mean, max, LSE), DiffGradCAM++ (mean, max, LSE). We then fine-tune each ImageNet-pretrained backbone for one epoch with SHAM adversarial salience (as in~\cite{heo2019fooling}) and recompute CAMs. For each sample, CAM type, and architecture, we measure the MSE between maps from the clean and SHAM-tuned models (lower is better), average over the dataset, and compare means using the Wilcoxon rank–sum test.

\subsection{Inapplicability of Insertion--Deletion Under Passive Fooling}
\label{subsec:id_not_diagnostic}

A common faithfulness evaluation perturbs images by removing (deletion) or adding
(insertion) regions ranked by a saliency map and measures the area under the model
confidence curve \cite{risepetsiuk2018}. However, our setting is passive fooling: model parameters are optimized so that predictions are preserved on natural inputs, while the gradients and attributions are altered. Our SHAM benchmark instantiates this threat by driving the explanation to a target pattern without degrading accuracy or entropy.

Insertion--deletion judges an explanation's sensitivity to counterfactual pixel perturbations. Passive fooling judges an explanation's vulnerability to manipulated activations. This renders insertion--deletion as an unfit metric for this paper.

\subsection{Experiment Parameters and Compute Resources}
\label{ssec:exp_params}


\textbf{Setup.} For Iris PAD, models are trained for 50 epochs with SGD (lr$=0.002$), equal CE/MSE weights, and five random seeds at one hour per seed. For the SHAM test, we fine-tune each ImageNet-pretrained backbone (DenseNet, ResNet, Inception, and ConvNeXt) for one epoch with SGD requiring five hours each. All experiments run on a single NVIDIA RTX~A6000; compute scales approximately linearly with the number of architectures evaluated (four in our setup). The total GPU hours are thus $1\times5+4\times5=25$. See Table \ref{tab:runtimes} to see how long generating each CAM variant takes.

\begin{table}
  \centering
  \begin{tabular}{
    l
    S[table-format=4.3]  
    @{\;\;\(\pm\)\,}       
    S[table-format=2.3]  
  }
    \toprule
    \textbf{CAM variant} &
    \multicolumn{2}{c}{\textbf{Run time} (\si{\milli\second})} \\
    \midrule
    GradCAM             &   51.981  &  0.959   \\
    EigenCAM            &   78.558  &  1.986   \\
    HiResCAM            &   57.269  &  0.558   \\
    XGradCAM            &   54.339  &  0.298   \\
    ScoreCAM            & 1160.071  &  54.428  \\
    MeanDiffGradCAM     &   49.734  &  0.387   \\
    MaxDiffGradCAM      &   49.461  &  0.340   \\
    LSEDiffGradCAM      &   49.410  &  0.376   \\
    \midrule
    GradCAM++           &   49.057  &  0.354   \\
    MeanDiffGradCAM++   &   49.374  &  0.435   \\
    MaxDiffGradCAM++    &   49.438  &  0.274   \\
    LSEDiffGradCAM++    &   53.507  &  40.126  \\
    \bottomrule
  \end{tabular}
  \caption{Average run times to generate a single CAM of each variant considered in this study ($n=110$, with the first $10$ discarded).}
  \label{tab:runtimes}
\end{table}

\subsection{Datasets}
\label{ssec:datasets}

When referring to ImageNet, we use ImageNet2012 \cite{imagenet2012}. For Iris PAD we use the training, validation, and testing partitions published in \cite{boyd2022human} with resampling to make all attack types equal. 

The Iris PAD training set consist of 193 images from each of these seven classes for a total of $1,351$ samples: Real Iris \cite{real_iris1,real_artificial_textured_print_24,diseased38,real22,real_textured20,real_textured46,real43,real_textured45,boyd2022human}, Artificial \cite{real_artificial_textured_print_24,boyd2022human}, Textured Contacts \cite{real_artificial_textured_print_24,real_textured20,real_textured46,real_textured45,boyd2022human}, Post-Mortem \cite{post40}, Printouts \cite{real_print12,real_artificial_textured_print_24,print21}, Synthetic \cite{synth44}, and Diseased \cite{diseased38}. The validation consist of 500 set-disjoint images from each of the same seven classes for a total of $3,500$ samples. The test set consist of $11,592$ set-disjoint images from the seven classes for a total of  $81,144$ samples. The beneficial human salience for the salience-based training was provided by the authors of \cite{boyd2022human}.

\section{Results}
\label{sec:results}

\begin{figure*}
    \centering
    \begin{subfigure}{0.4\textwidth}
        \centering
        \includegraphics[width=\linewidth]{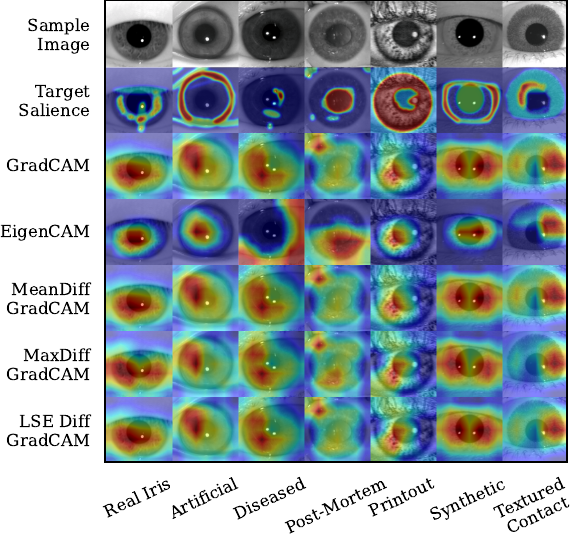}
        \caption{Salience produced by a model trained without SHAM on seven-class Iris PAD. As Iris PAD is not a trivial task, "Target Salience" shows an aggregate annotation of important features as determined by three human Iris PAD experts.} 
        \label{fig:sub_iris_human}
    \end{subfigure}
    \hfill
    \begin{subfigure}{0.4\textwidth}
        \centering
        \includegraphics[width=\linewidth]{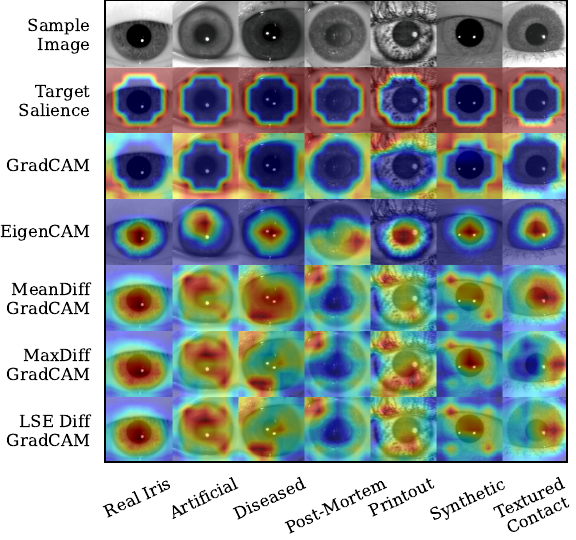}
        \caption{Salience produced by a model trained with SHAM on seven-class Iris PAD. "Target Salience" shows the adversarial SHAM used in training overlaid on the image. CAM types susceptible to SHAM training will more closely resemble "Target Salience."}
        \label{fig:sub_iris_sham}
    \end{subfigure}

    \vspace{1em}

    \begin{subfigure}{0.4\textwidth}
        \centering
        \includegraphics[width=\linewidth]{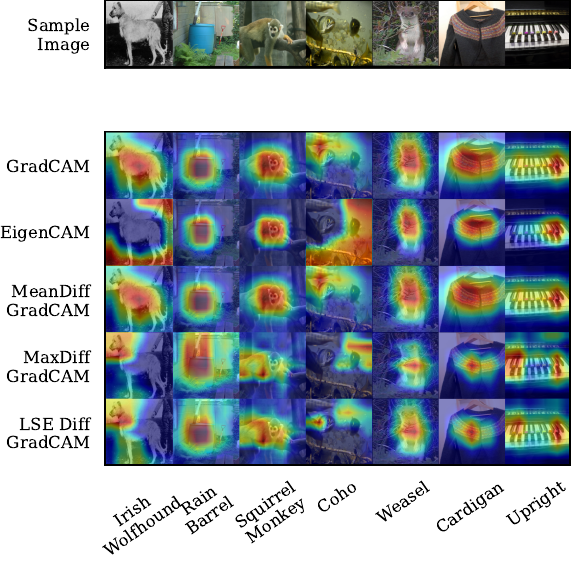}
        \caption{Salience produced by a model trained without SHAM on 1,000-class ImageNet. Classifying ImageNet is human interpretable, thus "Target Salience" is omitted. In this non-adversarial case, a better DiffGradCAM more closely resembles GradCAM.}
        \label{fig:sub_imagenet_normal}
    \end{subfigure}
    \hfill
    \begin{subfigure}{0.4\textwidth}
        \centering
        \includegraphics[width=\linewidth]{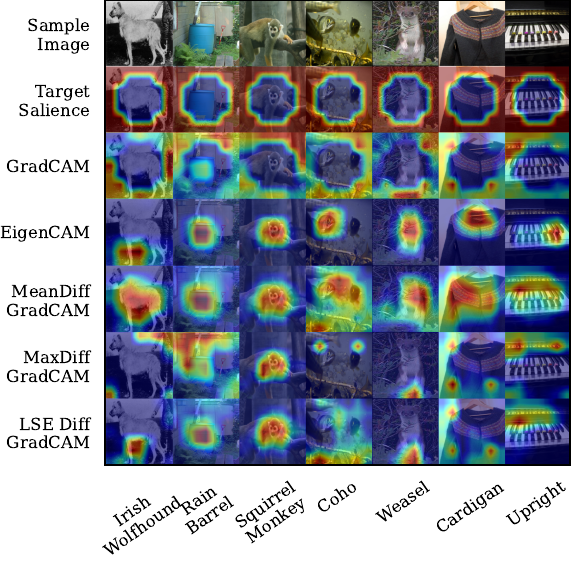}
        \caption{Salience produced by a model trained with SHAM on 1,000-class ImageNet. "Target Salience" shows the adversarial SHAM used in training overlaid on the image. CAM types susceptible to the adversarial SHAM training will more closely resemble "Target Salience."}
        \label{fig:sub_imagenet_sham}
    \end{subfigure}

    \caption{Qualitative analysis of the CAM types examined in this paper on datasets with small and large number of classes and with and without adversarial SHAM interference. We showcase base GradCAM, class-agnostic EigenCAM, and our aggregate class DiffGradCAM on DenseNet architecture. SHAM serves as the passive fooling benchmark and CAMs altered by SHAM inclusion are considered fooled.}
    \label{fig:sham_comparison}
\end{figure*}

\subsection{\textbf{Answering RQ1 (In the novel, few-class Iris PAD domain, can SHAM-based passive fooling produce misleading CAMs without negatively impacting classification accuracy?)}}

\vskip-3mm
\begin{table}[!h]
    \centering
    \footnotesize
    \caption{AUROC performance on Iris PAD dataset. Mean accuracy scores and standard deviations for the balanced classes are shown across independent train-test runs as specified in Sec. \ref{sec:training-scenarios}.}
    \begin{tabular}{l c}
        \toprule
        \textbf{Salience-based Model} & \textbf{Accuracy Score ($\uparrow$)} \\
        \midrule
        Trained with beneficial human salience         & $0.9723 \pm 0.0021$ \\
        Trained with adversarial SHAM salience        & $0.9766 \pm 0.0018$ \\
        \bottomrule
    \end{tabular}
    
    \label{tab:irispad_auroc}
\end{table}

Quantitatively, in Table \ref{tab:irispad_auroc}, we see that the models trained with adversarial SHAM salience do not perform worse than those trained with beneficial (human) salience. Qualitatively, we see in Fig. \ref{fig:sub_iris_human} that in the non-adversarial context there is no significant difference between GradCAM and the DiffGradCAM variants. In Fig. \ref{fig:sub_iris_sham}, we see that GradCAM has been altered to match the adversarial SHAM, but in this few-class classification task all three considered DiffGradCAM highlight similar and relevant features. Previous state-of-the-art EigenCAM, while focusing on different features than DiffGradCAM, also does not match the SHAM salience.

\textbf{Hence, the answer to RQ1 is affirmative:} In the novel, few-class Iris PAD domain, SHAM-based passive fooling produces misleading CAMs without negatively impacting classification accuracy.

\begin{table*}[ht]
    \centering
    \caption{Similarity scores (specified in \ref{sec:training-scenarios}). For each architecture, we report MSE between DiffGradCAM and its baseline GradCAM, and between DiffGradCAM++ and its baseline GradCAM++ (lower is better).}
    \begin{tabular}{l c c c c}
        \toprule
        \textbf{DiffGradCAM} & \textbf{DenseNet}    & \textbf{ResNet}     & \textbf{Inception}  & \textbf{ConvNeXt}  \\
        \midrule
        MeanDiffGradCAM      & $<0.001 \pm <0.001$  & $<0.001 \pm <0.001$ & $<0.001 \pm <0.001$ & $<0.001 \pm <0.001$ \\
        MaxDiffGradCAM       & $0.0852 \pm 0.0761$  & $0.0858 \pm 0.0768$ & $0.0937 \pm 0.1017$ & $0.0307 \pm 0.0424$ \\
        LSEDiffGradCAM      & $0.0615 \pm 0.0719$  & $0.0656 \pm 0.0739$ & $0.0848 \pm 0.1014$ & $0.0059 \pm 0.0217$ \\
        \midrule
        \textbf{DiffGradCAM++} & \textbf{DenseNet}    & \textbf{ResNet}     & \textbf{Inception}  & \textbf{ConvNeXt}  \\
        \midrule
        MeanDiffGradCAM++      & $<0.001 \pm <0.001$  & $<0.001 \pm <0.001$ & $<0.001 \pm <0.001$ & $<0.001 \pm <0.001$ \\
        MaxDiffGradCAM++       & $0.0055 \pm 0.0066$  & $0.0066 \pm 0.0081$ & $0.0029 \pm 0.0046$ & $0.0356 \pm 0.0315$ \\
        LSEDiffGradCAM++      & $0.0038 \pm 0.0050$  & $0.0049 \pm 0.0068$ & $0.0024 \pm 0.0040$ & $0.0084 \pm 0.0176$ \\
        \bottomrule
    \end{tabular}
    
    \label{tab:dgc_vs_gradcam}
\end{table*}

\subsection{\textbf{Answering RQ2 (In the standard, large-scale ImageNet setting, do any of the proposed DiffGradCAM variants match GradCAM in a non-adversarial context?)}}

Qualitatively we see in Fig. \ref{fig:sub_imagenet_normal}, when the number of classes is large, i.e., the contributions from false logits to the difference logit have grown, the DiffGradCAM variants no longer resemble each other and some do not resemble GradCAM. 
However, quantitatively we see in Table \ref{tab:dgc_vs_gradcam} that across all architectures (DenseNet, ResNet, Inception, and ConvNeXt) MeanDiffGradCAM closely resembles GradCAM, indicating that MeanDiffGradCAM is a reliable drop-in replacement for GradCAM on non-adversarial models ($\text{mean MSE} < 10^{-3}$). 
Similarly, the higher-order derivative version, MeanDiffGradCAM++ resembles GradCAM++ with an MSE difference less than $0.001$ on all architectures.
Furthermore, statistical testing indicates that the MeanDiffGradCAM similarity score differs significantly from both MaxDiffGradCAM and LSEDiffGradCAM similarity scores ($p < 0.0001$ with $\alpha=0.05$) and this holds true for MeanDiffGradCAM++ compared to Max- and LSEDiffGradCAM++.

\textbf{Hence, the answer to RQ2 is affirmative:} MeanDiffGradCAM (and MeanDiffGradCAM++) match GradCAM (and GradCAM++) on non-adversarial models (MSE < 1e-3 across 4 backbones). GradCAM mapping is not lost or altered if DiffGradCAMs are used when there is no adversarial attack.

\begin{table*}[ht]
    \centering
    \caption{Susceptibility score (specified in \ref{sec:training-scenarios}) to SHAM-based adversarial fine-tuning (lower is better) across architectures, covering GradCAM, \textbf{GradCAM++}, EigenCAM, HiResCAM, XGradCAM, ScoreCAM, DiffGradCAM (mean, max, LSE), and \textbf{DiffGradCAM++} (mean, max, LSE). }
    \begin{tabular}{l c c c c}
        \toprule
        \textbf{CAM Type} & \textbf{DenseNet}   & \textbf{ResNet}     & \textbf{Inception}  & \textbf{ConvNeXt}  \\
        \midrule
        GradCAM           & $0.1019 \pm 0.0728$ & $0.2318 \pm 0.0699$ & $0.2549 \pm 0.0584$ & $0.1109 \pm 0.0921$ \\
        EigenCAM          & $0.0724 \pm 0.0531$ & $0.0675 \pm 0.0669$ & $0.0506 \pm 0.0748$ & $0.1734 \pm 0.0747$ \\
        HiResCAM          & $0.1339 \pm 0.0657$ & $0.2165 \pm 0.0688$ & $0.2295 \pm 0.0556$ & $0.0829 \pm 0.0767$ \\
        XGradCAM          & $0.1157 \pm 0.0528$ & $0.2165 \pm 0.0688$ & $0.2295 \pm 0.0556$ & $0.0829 \pm 0.0767$ \\
        ScoreCAM          & $0.1661 \pm 0.0646$ & $0.1556 \pm 0.0762$ & $0.0657 \pm 0.0529$ & $0.1098 \pm 0.0748$ \\
        MeanDiffGradCAM   & $\mathbf{0.0460 \pm 0.0398}$ & $\mathbf{0.0567 \pm 0.0278}$ & $\mathbf{0.0402 \pm 0.0359}$ & $0.1153 \pm 0.0955$ \\
        MaxDiffGradCAM    & $0.1200 \pm 0.0621$ & $0.0914 \pm 0.0452$ & $0.1237 \pm 0.0359$ & $0.0860 \pm 0.0551$ \\
        LSEDiffGradCAM    & $0.1050 \pm 0.0632$ & $0.0804 \pm 0.0412$ & $0.1133 \pm 0.0798$ & $\mathbf{0.0810 \pm 0.0563}$ \\
        \midrule
        GradCAM++         & $0.0299 \pm 0.0220$ & $0.0629 \pm 0.0295$ & $\mathbf{0.0351 \pm 0.0268}$ & $0.1477 \pm 0.1062$ \\
        MeanDiffGradCAM++ & $\mathbf{0.0277 \pm 0.0199}$ & $\mathbf{0.0519 \pm 0.0266}$ & $0.0361 \pm 0.0268$ & $0.1442 \pm 0.1053$ \\
        MaxDiffGradCAM++  & $0.0335 \pm 0.0220$ & $0.0686 \pm 0.0347$ & $0.0456 \pm 0.0314$ & $0.0908 \pm 0.0559$ \\
        LSEDiffGradCAM++  & $0.0309 \pm 0.0211$ & $0.0636 \pm 0.0332$ & $0.0436 \pm 0.0301$ & $\mathbf{0.0906 \pm 0.0596}$ \\
        \bottomrule 
    \end{tabular}
    
    \label{tab:sham_susceptibility}
\end{table*}

\subsection{\textbf{Answering RQ3 (In the large-scale ImageNet scenario, are DiffGradCAM variants resistant to SHAM-based adversarial manipulation, and how does their resistance compare to established class-independent CAM methods?)}}

In Table \ref{tab:sham_susceptibility} we see the quantitative susceptibility of different CAM types to adversarial SHAM training across four architectures. Values represent the mean MSE ± standard deviation between CAMs generated on identical images by models trained with and without SHAM on ImageNet on 5000 samples. A lower MSE indicates higher resistance to manipulation. 

We observe the following. First, MeanDiffGradCAM is most consistently the most resistant method of all the first-order derivative CAMs and MeanDiffGradCAM++ the most resistant for the higher-order derivative CAMs. MeanDiffGradCAM++ attains the lowest error among all methods on DenseNet and ResNet and on ConvNeXt, LSE/MaxDiffGradCAM++ yield the smallest errors. On Inception, GradCAM++ remains the best, although it is essentially tied with MeanDiffGradCAM++ (difference of $\approx 10^{-3}$). With the exception of GradCAM++ with EigenCAM on Inception and Max- with LSEDiffGradCAM++ on ConvNeXt, differences between the best and other methods are statistically significant by Wilcoxon rank–sum ($p < 10^{-3}$, $\alpha = 0.05$), though several are close in magnitude.

Second, the DiffGradCAM++ variants generally reduce susceptibility relative to their DiffGradCAM counterparts: \eg, on DenseNet, ResNet, and Inception. 

Third, the class-agnostic method (\eg, EigenCAM) can be competitive on some backbones but is never the highest performing method and is inconsistent across architectures.  Overall, \textbf{DiffGradCAM++} variants provide the strongest and most consistent resistance to SHAM, with mean-based baselines being the safest default.

\textbf{Hence, the answer to RQ3 is affirmative:} MeanDiffGradCAM(++) is resistant to adversarial SHAM fine-tuning, often matching or exceeding state-of-the-art across backbones.

\section{Limitations}
\label{sec:limitations}

\textbf{Aggregator choice.} Across ImageNet and Iris PAD the mean aggregator generally gives the most stable DiffGradCAM and DiffGradCAM++ maps, matching the variance analysis in Sec.~\ref{sec:dist_choice}. However, on ConvNeXt LSE performed best among DiffGradCAM++ variants.


\textbf{Human-based metrics.} We report MSE-based similarity and susceptibility with rank tests for significance as is fitting for a primary quantitative study. However, as human perception is a key component in determining the usefulness of model explanations, future human studies would further validate utility.

\section{Conclusion}
We introduced DiffGradCAM and DiffGradCAM++, which target logit differences rather than a single logit. This decision-aligned formulation aggregates more of the model’s decision process and enhances robustness to adversarial manipulation. Across Iris PAD and ImageNet, and over four archetures, 
the mean-aggregated variants (MeanDiffGradCAM, MeanDiffGradCAM++) typically are the best match to their respective baselines on clean models while exhibiting lower susceptibility to SHAM. The modifications are plug-and-play and preserve the standard CAM pipeline.

Key takeaways include: (i) mean-based contrastive targets as a reliable drop-in replacement for GradCAM/GradCAM++; (ii) higher-order derivative weighting (the ``++'' family) further stabilizes explanations in multi-instance scenes; and (iii) decision alignment improves robustness beyond mere similarity to a baseline map.

As practical guidance, one may use MeanDiffGradCAM by default for single-pass deployment when GradCAM is the default and prefer MeanDiffGradCAM++ when multiple object instances or tighter localization is required.

The evidence across RQ1–RQ3 indicates that decision-aligned CAMs, especially the mean-aggregated DiffGradCAM++, offer the most consistent robustness to passive saliency manipulation while preserving the desirable behavior of their GradCAM/GradCAM++ baselines.

Future work should explore the application of contrastive principles to other explanation methods and assess DiffGradCAM's utility in high-stakes domains where explanation robustness is particularly critical.

\section{Acknowledgement}
This work was supported by the U.S. Department of Defense (Contract No. W52P1J-20-9-3009). Any opinions, findings, and conclusions or recommendations expressed in this material are those of the authors and do not necessarily reflect the views of the U.S. Department of Defense or the U.S. Government. The U.S. Government is authorized to reproduce and distribute reprints for Government purposes, notwithstanding any copyright notation here on.

{
    \small
    \bibliographystyle{ieeenat_fullname}
    \bibliography{main}
}


\end{document}